\documentclass[conference]{IEEEtran}
\IEEEoverridecommandlockouts
\usepackage{cite}
\usepackage{amsmath,amssymb,amsfonts}
\usepackage{algorithmic}
\usepackage{graphicx}
\usepackage{textcomp}
\usepackage{xcolor}


\usepackage{subfigure}
\usepackage{bbm}
\usepackage{mathrsfs}
\usepackage{indentfirst}
\usepackage{multirow}

\usepackage{amsthm}
\usepackage{amsmath}

\newtheorem{theorem}{Theorem}
\newtheorem{definition}{Definition}
\newtheorem{lemma}{Lemma}

\theoremstyle{definition}

\def\Real{\mathbb{R}}

\def\BibTeX{{\rm B\kern-.05em{\sc i\kern-.025em b}\kern-.08em
    T\kern-.1667em\lower.7ex\hbox{E}\kern-.125emX}}
\begin{document}

\title{Policy Gradients for Probabilistic Constrained Reinforcement Learning
\thanks{This work was supported by the IBM Rensselaer Research Collaboration.}
}

\author{\IEEEauthorblockN{Weiqin Chen}
\IEEEauthorblockA{\textit{Department of Electrical,} \\
\textit{Computer, and Systems Engineering} \\
\textit{Rensselaer Polytechnic Institute}\\
Troy, NY, USA \\
chenw18@rpi.edu}
\and
\IEEEauthorblockN{Dharmashankar Subramanian}
\IEEEauthorblockA{\textit{IBM T. J. Watson Research Center}\\
Yorktown Heights, NY, USA \\
dharmash@us.ibm.com}
\and
\IEEEauthorblockN{Santiago Paternain}
\IEEEauthorblockA{\textit{Department of Electrical,} \\
\textit{Computer, and Systems Engineering} \\
\textit{Rensselaer Polytechnic Institute}\\
Troy, NY, USA \\
paters@rpi.edu}
}

\maketitle

\begin{abstract}
This paper considers the problem of learning safe policies in the context of reinforcement learning (RL). In particular, we consider the notion of probabilistic safety. This is, we aim to design policies that maintain the state of the system in a safe set with high probability. This notion differs from cumulative constraints often considered in the literature. The challenge of working with probabilistic  safety is the lack of expressions for their gradients. Indeed, policy optimization algorithms rely on gradients of the objective function and the constraints. To the best of our knowledge, this work is the first one providing such explicit gradient expressions for probabilistic constraints. It is worth noting that the gradient of this family of constraints can be applied to various policy-based algorithms. We demonstrate empirically that it is possible to handle probabilistic constraints in a continuous navigation problem.
\end{abstract}

\begin{IEEEkeywords}
reinforcement learning, probabilistic constraint, safe policy,  policy gradient
\end{IEEEkeywords}

\section{Introduction}
\label{sec_intro}
Reinforcement learning (RL) has gained traction as a solution to the problem of computing policies to perform challenging and high-dimensional tasks, e.g., playing video games~\cite{mnih2013playing}, mastering Go~\cite{silver2017mastering},  robotic manipulation~\cite{levine2016end} and locomotion~\cite{duan2016benchmarking}, etc. However, in general, RL algorithms are only concerned with maximizing a cumulative reward~\cite{watkins1992q,sutton1999policy}, which may lead to risky behaviors~\cite{garcia2015comprehensive} in realistic domains such as robot navigation~\cite{kahn2018self}.

Taking into account the safety requirements motivates the development of policy optimization under safety guarantees~\cite{geibel2006reinforcement,kadota2006discounted,chow2017risk}. Some approaches consider risk-aware objectives or regularized solutions where the reward is modified to take into account the safety requirements~\cite{howard1972risk,sato2001td,geibel2005risk}. 
Other formulations include Constrained Markov Decision Processes (CMDPs)~\cite{altman1999constrained}
where additional cumulative (or average) rewards need to be kept above a desired threshold. This approach has been commonly used to induce safe behaviors~\cite{borkar2005actor,bhatnagar2012online,liang2018accelerated, achiam2017constrained, tessler2018reward, yang2020projection, zhang2020first, liu2020ipo, zhang2022penalized}. To solve these constrained problems, primal-dual algorithms~\cite{borkar2005actor,bhatnagar2012online,liang2018accelerated,achiam2017constrained, tessler2018reward} combined with classical and state-of-the-art policy-based algorithms, e.g., REINFORCE~\cite{williams1992simple},  DDPG~\cite{lillicrap2015continuous}, TRPO~\cite{schulman2015trust}, PPO~\cite{schulman2017proximal} are generally used.


In this cumulative constraint setting, safety violations are acceptable as long as the amount of violations does not exceed the desired thresholds. This makes them often not suitable for safety-critical applications. For instance, in the context of autonomous driving, even one collision is unacceptable. 

A more suitable notion of safety in this context is to guarantee that the whole trajectory of the system remains within a set that is deemed to be safe (see e.g.,~\cite{castellano2022reinforcement}). Ideally, one would like to achieve this goal for every possible trajectory. This being an ambitious goal, in this work we settle for solutions that guarantee every time safety with high probability. We describe this setting in detail in Section \ref{sec_problem_formulation}. 
 
The main challenge in solving problems under probabilistic safety constraints is that explicit policy gradient-like expressions for such constraints are not readily available. Indeed in \cite{tessler2018reward,paternain2022safe} replace this constraint with a suitable lower bound.
In \cite{peng2021separated} an approximation of the gradient is also provided. These limitations, prevent us  from running the aforementioned policy-based algorithms. In Section~\ref{Gradient of the Probabilistic Constraint}, we present the main contribution of this work: an expression for the gradient that enables stochastic approximations. Other than concluding remarks, the paper finishes (Section~\ref{Numerical_Experiments}) with numerical examples that demonstrate the use of the probabilistic safe gradient to train safe policies in a navigation task.

\section{Problem Formulation}
\label{sec_problem_formulation}
In this work, we consider the problem of finding optimal policies for Markov Decision Processes (MDPs) under probabilistic safety guarantees. In particular, we are interested in situations where the state transition distributions are unknown and thus the policies need to be computed from data. An MDP~\cite{sutton2018reinforcement} is defined by a tuple ($\mathcal{S}, \mathcal{A}, r, \mathbb{P}, \mu, T$), where $\mathcal{S}$ is the state space, $\mathcal{A}$ is the action space, $r: \mathcal{S} \times \mathcal{A} \to \Real$ is the reward function describing the quality of the decision, {$\mathbb{P}_{s_t \to s_{t+1}}^{a_t} (\hat{\mathcal{S}}):=\mathbb{P}(s_{t+1} \in \hat{\mathcal{S}} \mid s_t, a_t)$ where $\hat{\mathcal{S}} \subset \mathcal{S}, s_t \in \mathcal{S}, a_t \in \mathcal{A}, t\in \mathbb{N}$ is the transition probability describing the dynamics of the system, $\mu (\hat{\mathcal{S}}):= \mathbb{P}(s \in \hat{\mathcal{S})}$ is the initial state distribution, }and $T$ is the time horizon. The state and action at time $t$ are random variables denoted respectively by $S_t$ and $A_t$. A \emph{policy} is a conditional distribution $\pi_\theta (a|s)$ parameterized by $\theta \in \Real^d$ (for instance the weights and biases of neural networks), from which the agent draws action $a \in \mathcal{A}$ when in the corresponding state $s \in \mathcal{S}$. In the context of MDPs the objective is to find a policy that maximizes the value function. The latter is defined as 
\begin{equation}\label{eqn_val_func_finite}
    V (\theta) =  \mathbb{E}_{\mathbf{a} \sim \pi_\theta (\mathbf{a}|\mathbf{s}), S_0 \sim \mu} \left[\sum\limits_{t=0}^{T} r(S_t, A_t)\right],
\end{equation}
where $\mathbf{a}$ and $\mathbf{s}$ denote the sequences of actions and states for the whole episode, this is, from time $t=0$ to $t=T$. The subscripts of this expectation are omitted in the remaining of the paper for simplicity.


In this paper, we are concerned with imposing safety requirements. In particular, we focus on the notion of probabilistic safety which we formally define next.
%
%
\begin{definition}
\label{definition_safety}
A policy $\pi_\theta$ is $(1-\delta)$-safe for the set $\mathcal{S}_\text{safe} \subset \mathcal{S}$ if and only if $\mathbb{P} \left(\bigcap\limits_{t=0}^{T} \{ S_t \in \mathcal{S}_\text{safe}\} |\pi_\theta \right) \geq 1-\delta$.
\end{definition}
%
%
With this definition, we formulate the following probabilistic safe RL problem as a constrained optimization problem
\begin{align}\label{eqn_problem1}
 P^\star = &\max\limits_{\theta \in \Real^d} \, V(\theta)      \nonumber \\
 &\text{s.t.} \quad \mathbb{P} \left(\bigcap\limits_{t=0}^{T} \{ S_t \in \mathcal{S}_\text{safe}\} |\pi_\theta \right) \geq 1-\delta. 
\end{align}

{Note that the previous problem differs from most of the safe RL literature that work with the cumulative constraint setting (see e.g., ~\cite{achiam2017constrained, yang2020projection, zhang2022penalized, tessler2018reward}). To solve problem~\eqref{eqn_problem1}, it is conceivable to employ {gradient-based} methods e.g., regularization~\cite{censor1977pareto} or primal-dual~\cite{arrow1958studies} to achieve local optimal solutions. For instance, consider the regularization method with a fixed penalty. This is, for $\lambda>0$ we formulate the following \emph{unconstrained} problem as an approximation to the \emph{constrained} problem \eqref{eqn_problem1}
\begin{align}\label{eqn_agmt_obj}
 \mathbb{E} \left[\sum\limits_{t=0}^{T} r(S_t, A_t) \right] \! + \! \lambda \! \left(\!  \mathbb{P}  \left(\bigcap\limits_{t=0}^{T} \{ S_t \in \mathcal{S}_\text{safe}\} |\pi_\theta \right) \!- \! (1-\delta) \! \right).
\end{align}
%
Note that $\lambda$ trades-off safety and task performance. Indeed, for large values of $\lambda$ solutions to \eqref{eqn_agmt_obj} will prioritize safe behaviors, whereas for small values of $\lambda$ the solutions will focus on maximizing the { expected cumulative rewards}.

Then, to solve problem~\eqref{eqn_agmt_obj} locally, gradient ascent~\cite{bertsekas1997nonlinear} or its stochastic versions are generally employed. Note that the gradient of the first term in~\eqref{eqn_agmt_obj} can be computed using the Policy Gradient Theorem~\cite{sutton1999policy}. Nevertheless, the lack of an expression for the gradient of the probabilistic safety constraint, i.e., $\nabla_\theta \mathbb{P} \left(\bigcap\limits_{t=0}^{T} \{ S_t \in \mathcal{S}_\text{safe}\} |\pi_\theta \right)$ prevents us from applying {the gradient ascent} family of methods to solve \eqref{eqn_agmt_obj}. In the next section, we provide such an expression for the gradient that allows us to overcome this limitation.



\section{The Gradient of Probabilistic Constraints}
\label{Gradient of the Probabilistic Constraint}
%
We start this section by defining an important quantity in what follows next. For any $t$ such that $0\leq t\leq T$ define 
\begin{equation}\label{def_G_cumulative_product}
G_t = \prod_{u=t}^T\mathbbm{1}\left(S_u\in\mathcal{S}_{\text{safe}}\right).    
\end{equation}
Having defined this quantity we are now in conditions of providing an expression for the gradient of the probabilistic constraint. This is the subject of the following Theorem.
\begin{theorem}\label{theorem_safe_policy_gradient}
Let $S_0 \in \mathcal{S}_\text{safe}$, the gradient of the probability of being safe for a given policy $\pi_\theta$ yields 
\begin{align}\label{eqn_theorem}
   &\nabla_\theta \mathbb{P} \left(\bigcap\limits_{t=0}^{T} \{ S_t \in \mathcal{S}_\text{safe}\} |\pi_\theta, S_0 \right) \nonumber \\
   &=\mathbb{E}\left[ \sum\limits_{t=0}^{T-1}G_1\nabla_{\theta}\log\pi_\theta(A_t\mid S_t)\mid \pi_\theta, S_0\right].
\end{align}
\end{theorem}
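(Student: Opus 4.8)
The plan is to recognize the safety probability as the expectation of the trajectory indicator $G_0$ and then apply the likelihood-ratio (score-function) identity that underlies the Policy Gradient Theorem. The crucial observation is that I will \emph{not} attempt to differentiate the (discontinuous) indicators directly; instead I will move the gradient onto the smooth trajectory law, which is exactly what makes the argument go through. First I would note that, since $G_0=\prod_{u=0}^T\mathbbm{1}(S_u\in\mathcal{S}_{\text{safe}})$ is precisely the indicator of the event $\bigcap_{t=0}^T\{S_t\in\mathcal{S}_{\text{safe}}\}$, its conditional expectation equals the probability of interest:
\[
\mathbb{P}\left(\bigcap_{t=0}^{T}\{S_t\in\mathcal{S}_{\text{safe}}\}\;\middle|\;\pi_\theta,S_0\right)=\mathbb{E}\left[G_0\mid\pi_\theta,S_0\right].
\]
This reduces the claim to differentiating the expectation of a bounded, $\theta$-independent functional of the trajectory with respect to the parameters that govern the trajectory distribution.

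Next I would write out the conditional law of the trajectory given $S_0$. The actions $A_0,\dots,A_{T-1}$ and the resulting states factorize as $p_\theta=\prod_{t=0}^{T-1}\pi_\theta(A_t\mid S_t)\,\mathbb{P}(S_{t+1}\mid S_t,A_t)$; note that the terminal action $A_T$ does not enter because $G_0$ depends on the states only through $A_0,\dots,A_{T-1}$, which is why the sum in the statement terminates at $T-1$. Differentiating under the integral and applying the log-derivative identity $\nabla_\theta p_\theta=p_\theta\,\nabla_\theta\log p_\theta$ gives
\[
\nabla_\theta\,\mathbb{E}\left[G_0\mid S_0\right]=\mathbb{E}\left[G_0\,\nabla_\theta\log p_\theta\mid S_0\right].
\]
Because neither the transition kernel $\mathbb{P}(S_{t+1}\mid S_t,A_t)$ nor the initial state $S_0$ depends on $\theta$, the log-gradient collapses to $\nabla_\theta\log p_\theta=\sum_{t=0}^{T-1}\nabla_\theta\log\pi_\theta(A_t\mid S_t)$, so that
\[
\nabla_\theta\,\mathbb{E}\left[G_0\mid S_0\right]=\mathbb{E}\left[\sum_{t=0}^{T-1}G_0\,\nabla_\theta\log\pi_\theta(A_t\mid S_t)\;\middle|\;S_0\right].
\]

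The final step is the simplification $G_0=G_1$: since we are given $S_0\in\mathcal{S}_{\text{safe}}$, we have $\mathbbm{1}(S_0\in\mathcal{S}_{\text{safe}})=1$, hence $G_0=\mathbbm{1}(S_0\in\mathcal{S}_{\text{safe}})\,G_1=G_1$, which substituted above yields exactly \eqref{eqn_theorem}. I expect the main obstacle to be the rigorous justification of the interchange of $\nabla_\theta$ and the integral over trajectories: this requires a dominated-convergence (or equivalent regularity) argument, which I would support by invoking the boundedness of $G_0\in\{0,1\}$ together with standard differentiability and integrability assumptions on the parameterized policy $\pi_\theta(\cdot\mid\cdot)$. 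A secondary point worth stating carefully is \emph{why} the non-differentiability of the indicator $G_0$ poses no problem, namely that the score-function manipulation never differentiates $G_0$ and instead transfers the gradient entirely onto the policy density.
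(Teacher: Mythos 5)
Your proof is correct, but it takes a genuinely different route from the paper. You apply the trajectory-level likelihood-ratio (score-function) identity directly: write the safety probability as $\mathbb{E}[G_0\mid S_0]$, express the expectation against the full trajectory density $\prod_{t=0}^{T-1}\pi_\theta(a_t\mid s_t)\,p(s_{t+1}\mid s_t,a_t)$, differentiate under the integral, and observe that the log-gradient of the trajectory law collapses onto the policy factors alone. The paper instead proceeds recursively: it proves a one-step recursion for $\nabla_\theta\mathbb{E}[G_t\mid S_{t-1}]$ via the tower property and the log-trick applied to a single action at a time (Lemma~\ref{lemma_safe_policy_gradient_G1}), unrolls the recursion down to the terminal term (Lemma~\ref{lemma_nabla_E_G1_S0_GT_ST-1}), and then resolves the boundary term $\nabla_\theta\mathbb{E}[G_T\mid S_{T-1}]$ separately. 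Your derivation is shorter, makes it transparent why the non-differentiability of the indicator is harmless (the gradient never touches $G_0$, only the policy density), and is exactly the classical REINFORCE argument specialized to the terminal functional $G_0$; the paper's recursive formulation avoids writing the full joint trajectory density and exposes the per-time-step conditional structure, which is the form one would want as a starting point for temporal-difference-style refinements. Both routes share the same technical caveat you correctly flag, namely justifying the interchange of $\nabla_\theta$ with the integral over trajectories, which the paper also performs without explicit regularity hypotheses. Your reduction $G_0=\mathbbm{1}(S_0\in\mathcal{S}_{\text{safe}})\,G_1=G_1$ under the hypothesis $S_0\in\mathcal{S}_{\text{safe}}$ matches the paper's, and your explanation of why the sum stops at $t=T-1$ (only $A_0,\dots,A_{T-1}$ influence $S_1,\dots,S_T$) is sound.
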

\begin{proof}

We proceed by presenting and proving the following two technical lemmas (Lemma~\ref{lemma_safe_policy_gradient_G1} and Lemma~\ref{lemma_nabla_E_G1_S0_GT_ST-1}).

\begin{lemma}
\label{lemma_safe_policy_gradient_G1}
Given $S_{t-1} \in \mathcal{S}_\text{safe}$ and $G_{t}, t=1,2,\cdots, T-1$ defined in \eqref{def_G_cumulative_product}, it holds that
\begin{align}\label{eqn_recursive_gradient}
\nabla_\theta\mathbb{E}\left[G_{t}\mid S_{t-1}\right] &= \mathbb{E}\left[\nabla_\theta\mathbb{E}\left[G_{t+1}\! \mid \! S_{t}\right]\mathbbm{1}\left(S_{t}\in\mathcal{S}_{\text{safe}}\right) \! \mid \! S_{t-1} \right] \nonumber \\
&+ \mathbb{E}\left[G_{t}\nabla_{\theta}\log\pi_\theta(A_{t-1}\mid S_{t-1}) \mid S_{t-1}\right].
\end{align}
\end{lemma}
\begin{proof}
We start the proof by rewriting the expectation of $G_1$ using the towering property of the expectation
\begin{align}
    \mathbb{E}\left[G_1\mid S_0\right] &=\mathbb{E}\left[\mathbb{E}\left[G_1\mid S_1\right]\mid S_0\right] \nonumber \\
    &= \mathbb{E}\left[\mathbb{E}\left[G_2\mathbbm{1}\left(S_1 \in \mathcal{S}_{\text{safe}}\right)\mid S_1\right]\mid S_0\right], 
\end{align}
where the second equality follows from \eqref{def_G_cumulative_product}. Since $S_1$ is measurable with respect to the $\sigma$-algebra $\mathcal{F}_{1}$ it follows that
\begin{equation}
    \mathbb{E}\left[G_1\mid S_0\right] = \mathbb{E}\left[\mathbb{E}\left[G_2\mid S_1\right] \mathbbm{1}\left(S_1 \in \mathcal{S}_{\text{safe}}\right)\mid S_0\right].
\end{equation}
Rewriting the outer expectation in terms of the probability distribution of $S_1$, the previous expression reduces to
\begin{align}\label{eqn_conditional_return}
    \mathbb{E}\left[G_1\mid S_0\right] &= \int_{\mathcal{S}}\mathbb{E}\left[G_2\mid s_1\right]\mathbbm{1}\left(s_1 \in \mathcal{S}_{\text{safe}}\right) p\left(s_1 \mid S_0\right) \, ds_1 \nonumber \\
    &=\int_{\mathcal{S}\times \mathcal{A}} \mathbb{E}\left[G_2\mid s_1\right]\mathbbm{1}\left(s_1 \in \mathcal{S}_{\text{safe}}\right) \nonumber  \\
    &\quad \quad p(s_1 \mid S_0, \, a_0) \pi_{\theta}(a_0 \mid S_0) \, ds_1 da_0,
\end{align}
where the last equality follows from $p(s_1\mid S_0) = \int_{\mathcal{A}} p(s_1\mid S_0,\,a_0) \pi_{\theta}(a_0\mid S_0) \, da_0.$
%
%
%
Taking the gradient of the previous expression with respect to the policy parameters $\theta$ results in
\begin{align}\label{eqn_gradient_1}
    \nabla_\theta \mathbb{E}\left[G_1\mid S_0\right]  &= \int_{\mathcal{S}\times \mathcal{A}}\nabla_{\theta}\left(\mathbb{E}\left[G_2\mid s_1\right]\right)\mathbbm{1}\left(s_1 \in \mathcal{S}_{\text{safe}}\right) \nonumber \\
    & \quad\quad\quad\quad p(s_1 \mid S_0, \, a_0) \pi_{\theta}(a_0\mid S_0) \, ds_1 da_0  \nonumber \\
    &+\int_{\mathcal{S}\times \mathcal{A}}\mathbb{E}\left[G_2\mid s_1\right]\mathbbm{1}\left(s_1 \in \mathcal{S}_{\text{safe}}\right)  \\
    & \quad\quad\quad\quad p(s_1 \! \mid \! S_0, \, a_0) \nabla_{\theta}\pi_{\theta}(a_0 \! \mid \! S_0) \, ds_1 da_0 \nonumber.
\end{align}
Notice that the first in the right-hand side of the previous expression can be presented by
\begin{align}\label{eqn_pre_recursion_1}
&\int_{\mathcal{S}\times \mathcal{A}} \nabla_{\theta}\left(\mathbb{E}\left[G_2\mid s_1\right]\right)\mathbbm{1}\left(s_1 \in \mathcal{S}_{\text{safe}}\right) \nonumber \\
&\quad \quad \quad p(s_1 \mid S_0, \, a_0) \pi_{\theta}(a_0\mid S_0) \, ds_1 da_0  \nonumber \\
&= \mathbb{E}\left[\nabla_\theta\mathbb{E}\left[G_2\mid S_1\right]\mathbbm{1}\left(S_1\in\mathcal{S}_{\text{safe}}\right) \mid S_0 \right].
\end{align}

The second term using the ``log-trick'' yields
%
\begin{align}\label{eqn_logtrick}
    &\int_{\mathcal{S}\times \mathcal{A}}\mathbb{E}\left[G_2\mid s_1\right]\mathbbm{1}\left(s_1 \in \mathcal{S}_{\text{safe}}\right) \nonumber \\
    &\quad \quad \quad p(s_1 \mid S_0, \, a_0) \nabla_{\theta}\pi_{\theta}(a_0\mid S_0) \, ds_1 da_0 \nonumber \\
    &=\int_{\mathcal{S}\times \mathcal{A}}\mathbb{E}\left[G_2\mid s_1\right]\mathbbm{1}\left(s_1 \in \mathcal{S}_{\text{safe}}\right) p(s_1 \mid S_0, \, a_0) \nonumber \\
    &\quad \quad \quad \quad \pi_{\theta}(a_0\mid S_0) \nabla_\theta \log\pi_\theta(a_0\mid S_0) \, ds_1 da_0.
\end{align}
{
Likewise, since $s_1$ is measurable with respect to the $\sigma$-algebra $\mathcal{F}_{1}$, \eqref{eqn_logtrick} can be simplified as $\int_{\mathcal{S}\times \mathcal{A}}\mathbb{E}\left[G_1 \mid s_1\right] p(s_1 \mid S_0, \, a_0) \pi_{\theta}(a_0\mid S_0) \nabla_\theta \log\pi_\theta(a_0\mid S_0) \, ds_1 da_0,$
which is also equivalent to $ \mathbb{E}\left[\mathbb{E}\left[G_1 \mid S_1\right] \nabla_\theta \log\pi_\theta(A_0\mid S_0) \mid S_0 \right]$.
}
%
Since $\log\pi_\theta(A_0\mid S_0)$ is measurable given $S_1$, we have
\begin{align}
    &\mathbb{E}\left[\mathbb{E}\left[G_1 \mid S_1\right] \nabla_\theta \log\pi_\theta(A_0\mid S_0) \mid S_0 \right] \nonumber \\
    &= \mathbb{E}\left[\mathbb{E}\left[G_1 \nabla_\theta \log\pi_\theta(A_0\mid S_0) \mid S_1\right]  \mid S_0 \right].
\end{align}
Using the towering property of the expectation the previous expressions yield
\begin{align}\label{eqn_pre_recursion_2}
   &\int_{\mathcal{S}\times \mathcal{A}}\mathbb{E}\left[G_2\mid s_1\right]\mathbbm{1}\left(s_1 \in \mathcal{S}_{\text{safe}}\right) p(s_1 \mid S_0, \, a_0) \nonumber \\
   &\quad \quad \quad \nabla_{\theta}\pi_{\theta}(a_0\mid S_0) \, ds_1 da_0 \nonumber \\
   &= \mathbb{E}\left[G_1 \nabla_\theta \log\pi_\theta(A_0\mid S_0) \mid S_0 \right]. 
\end{align}
Then, combining \eqref{eqn_pre_recursion_1} with \eqref{eqn_pre_recursion_2} yields
\begin{align}\label{eqn_appendix_nabla_E_G1_S0}
\nabla_\theta\mathbb{E}\left[G_1\mid S_0\right] &= \mathbb{E}\left[\nabla_\theta\mathbb{E}\left[G_2\mid S_1\right]\mathbbm{1}\left(S_1\in\mathcal{S}_{\text{safe}}\right) \mid S_0 \right] \nonumber \\
&+ \mathbb{E}\left[G_1\nabla_{\theta}\log\pi_\theta(A_0\mid S_0)\mid S_0\right].
\end{align}
Repeating the process above $i$ times for $1 \leq i \leq T-1$, we obtain the following recursive definition of the gradient of the probability in \eqref{eqn_problem1} with respect to $\theta$
\begin{align}
\nabla_\theta\mathbb{E}\left[G_{i}\mid S_{i-1}\right] &= \mathbb{E}\left[\nabla_\theta\mathbb{E}\left[G_{i+1}\! \mid \! S_{i}\right]\mathbbm{1}\left(S_{i}\in\mathcal{S}_{\text{safe}}\right) \! \mid \! S_{i-1} \right]  \nonumber \\
&+ \mathbb{E}\left[G_{i}\nabla_{\theta}\log\pi_\theta(A_{i-1}\mid S_{i-1})\mid S_{i-1}\right].
\end{align}
This completes the proof of Lemma~\ref{lemma_safe_policy_gradient_G1}.
\end{proof}
\begin{lemma}
\label{lemma_nabla_E_G1_S0_GT_ST-1}
Given $S_{t-1} \in \mathcal{S}_\text{safe}$ and $G_{t}, t=1,2,\cdots, T-1$ defined in \eqref{def_G_cumulative_product}, it holds that
\begin{align}\label{eqn__nabla_E_G1_S0_GT_ST-1}
    \nabla_\theta\mathbb{E}\left[G_1\mid S_0\right] &=\sum\limits_{t=0}^{T-2}\mathbb{E}\left[G_1\nabla_{\theta}\log\pi_\theta(A_t\mid S_t)\mid S_0\right] \\ &+\mathbb{E}\left[\nabla_\theta\mathbb{E}\left[G_T \! \mid \! S_{T-1}\right]\! \prod_{t=1}^{T-1} \! \mathbbm{1}\left(\! S_{t}\in\mathcal{S}_{\text{safe}} \!\right) \! \mid \! S_0\right]  \nonumber. 
\end{align}
\end{lemma}
\begin{proof}
We proceed by employing Lemma~\ref{lemma_safe_policy_gradient_G1} to derive the gradient of the expectation of $G_1$ and $G_2$, respectively 
\begin{align}\label{eqn_mainbody_nabla_E_G1_S0}
\nabla_\theta\mathbb{E}\left[G_{1}\mid S_{0}\right] &= \mathbb{E}\left[\nabla_\theta\mathbb{E}\left[G_{2}\mid S_{1}\right]\mathbbm{1}\left(S_{1}\in\mathcal{S}_{\text{safe}}\right) \mid S_{0} \right] \nonumber \\ 
&+ \mathbb{E}\left[G_{1}\nabla_{\theta}\log\pi_\theta(A_{0}\mid S_{0})\mid S_{0}\right].
\end{align}
\begin{align}\label{eqn_mainbody_nabla_E_G2_S1}
   \nabla_\theta\mathbb{E}\left[G_2\mid S_1\right] &= \mathbb{E}\left[\nabla_\theta\mathbb{E}\left[G_3\mid S_2\right]\mathbbm{1}\left(S_2\in\mathcal{S}_{\text{safe}}\right) \mid S_1 \right] \nonumber \\
   &+ \mathbb{E}\left[G_2\nabla_{\theta}\log\pi_\theta(A_1\mid S_1)\mid S_1\right]. 
\end{align}
Then, substituting \eqref{eqn_mainbody_nabla_E_G2_S1} into \eqref{eqn_mainbody_nabla_E_G1_S0} yields
\begin{align}\label{eqn_mainbody_nabla_E_G1_S0_2}
    &\nabla_\theta\mathbb{E} [G_1\mid S_0 ] \nonumber \\
    &=\mathbb{E}[\mathbb{E} [\nabla_\theta\mathbb{E} [G_3\mid S_2 ]  \mathbbm{1} (S_2\in\mathcal{S}_{\text{safe}} ) \mid S_1  ]\mathbbm{1} (S_1\in\mathcal{S}_{\text{safe}} ) \nonumber \\
    &+\mathbb{E} [G_2\nabla_{\theta}\log\pi_\theta(A_1\mid S_1)\mid S_1 ]  \mathbbm{1} (S_1\in\mathcal{S}_{\text{safe}} ) \mid S_0] \nonumber \\
    &+\mathbb{E} [G_1\nabla_{\theta}\log\pi_\theta(A_0\mid S_0)\mid S_0 ].
\end{align}
As $\mathbbm{1}\left(S_1\in\mathcal{S}_{\text{safe}}\right)$ is measurable given $S_1$, we have
\begin{align}
    &\nabla_\theta\mathbb{E}\left[G_1\mid S_0\right] \nonumber \\
    &= \mathbb{E}[\mathbb{E}\left[\nabla_\theta\mathbb{E}\left[G_3\mid S_2\right]\mathbbm{1}\left(S_2\in\mathcal{S}_{\text{safe}}\right) \mathbbm{1}\left(S_1\in\mathcal{S}_{\text{safe}}\right) \mid S_1 \right] \nonumber \\
    & +\mathbb{E}\left[G_2\nabla_{\theta}\log\pi_\theta(A_1\mid S_1) \mathbbm{1}\left(S_1\in\mathcal{S}_{\text{safe}}\right) \mid S_1\right] \mid S_0] \nonumber \\ 
    & + \mathbb{E}\left[G_1\nabla_{\theta}\log\pi_\theta(A_0\mid S_0)\mid S_0\right].
\end{align}
By definition of $G_1$ we can simplify the second term of the right-hand side of the previous equation,
\begin{align}\label{eqn_appendix__nabla_E_G1_S0_3}
    &\nabla_\theta\mathbb{E}\left[G_1\mid S_0\right] \nonumber \\
    &= \mathbb{E}[\mathbb{E}\left[\nabla_\theta\mathbb{E}\left[G_3\mid S_2\right]\mathbbm{1}\left(S_2\in\mathcal{S}_{\text{safe}}\right) \mathbbm{1}\left(S_1\in\mathcal{S}_{\text{safe}}\right) \mid S_1 \right] \nonumber \\
    &+\mathbb{E}\left[G_1\nabla_{\theta}\log\pi_\theta(A_1\mid S_1) \mid S_1\right] \mid S_0] \nonumber \\
    &+ \mathbb{E}\left[G_1\nabla_{\theta}\log\pi_\theta(A_0\mid S_0)\mid S_0\right].
\end{align}
Using the towering property of expectation, \eqref{eqn_appendix__nabla_E_G1_S0_3} reduces to
\begin{align}
    &\nabla_\theta\mathbb{E}\left[G_1\mid S_0\right] \nonumber \\
    &= \mathbb{E}\left[\nabla_\theta\mathbb{E}\left[G_3\mid S_2\right]\mathbbm{1}\left(S_2\in\mathcal{S}_{\text{safe}}\right) \mathbbm{1}\left(S_1\in\mathcal{S}_{\text{safe}}\right) \mid S_0\right] \nonumber \\
    & +\mathbb{E}\left[G_1\nabla_{\theta}\log\pi_\theta(A_1\mid S_1) \mid S_0\right] \nonumber \\
    &+ \mathbb{E}\left[G_1\nabla_{\theta}\log\pi_\theta(A_0\mid S_0)\mid S_0\right].
\end{align}
Then repeatedly unwrapping $\nabla_\theta\mathbb{E}\left[G_1\mid S_0\right]$ in terms of $G_3,\ldots,G_T$ by Lemma~\ref{lemma_safe_policy_gradient_G1} yields
\begin{align}
    &\nabla_\theta\mathbb{E}\left[G_1\mid S_0\right] \nonumber \\ &=\mathbb{E} [\nabla_\theta\mathbb{E}\left[G_T\mid S_{T-1}\right] \mathbbm{1}\left(S_{T-1}\in\mathcal{S}_{\text{safe}}\right) \nonumber \\
    &\quad\cdots \mathbbm{1}\left(S_2\in\mathcal{S}_{\text{safe}}\right) \mathbbm{1}\left(S_1\in\mathcal{S}_{\text{safe}}\right) \mid S_0] \nonumber \\
    & +\mathbb{E}\left[G_1\nabla_{\theta}\log\pi_\theta(A_{T-2}\mid S_{T-2}) \mid S_0\right]+ \cdots \nonumber \\
    &+ \mathbb{E}\left[G_1\nabla_{\theta}\log\pi_\theta(A_0\mid S_0)\mid S_0\right] \nonumber \\
    &=\sum\limits_{t=0}^{T-2}\mathbb{E}\left[G_1\nabla_{\theta}\log\pi_\theta(A_t\mid S_t)\mid S_0\right] \nonumber \\
    &+ \mathbb{E}\left[\nabla_\theta\mathbb{E}\left[G_T\mid S_{T-1}\right]\prod_{t=1}^{T-1}\mathbbm{1}\left(S_{t}\in\mathcal{S}_{\text{safe}}\right) \mid S_0\right].
\end{align}
This completes the proof of Lemma~\ref{lemma_nabla_E_G1_S0_GT_ST-1}.
\end{proof}
We are now in conditions to prove Theorem~\ref{theorem_safe_policy_gradient}. We start by rewriting the probability of remaining safe in terms of $G_0$ defined in  \eqref{def_G_cumulative_product}. By definition of probability we have
\begin{align}
    &\mathbb{P} \left(\bigcap\limits_{t=0}^{T} \{ S_t \in \mathcal{S}_\text{safe}\} |\pi_\theta, S_0 \right) \nonumber \\
    &= \mathbb{E} \left[\mathbbm{1} \left(\bigcap\limits_{t=0}^{T} \{ S_t \in \mathcal{S}_\text{safe}  \}\right) |\pi_\theta, S_0 \right].
\end{align}
Note that the indicator function in the previous expression takes the value one, if and only if each $S_t\in\mathcal{S}_{\text{safe}}$. Hence, it is possible to rewrite the previous expression in terms of the product of indicator functions of states satisfying the safety condition at each time
\begin{align}\label{eqn_safe_policy_gradient_G0}
    \mathbb{P} \left(\bigcap\limits_{t=0}^{T} \{ S_t \in \mathcal{S}_\text{safe}\} |\pi_\theta, S_0 \right) &= \mathbb{E} \left[\! \prod\limits_{t=0}^{T} \! \mathbbm{1} (S_t \in \mathcal{S}_\text{safe}) |\pi_\theta, S_0 \right] \nonumber \\
    &= \mathbb{E}\left[G_0 | S_0\right],
\end{align}
where $\pi_\theta$ is omitted in the last equation for simplicity. By virtue of $S_0 \in \mathcal{S}_\text{safe}$, we obtain $ \mathbb{E}[G_0 | S_0] = \mathbb{E}[G_1 \cdot \mathbbm{1} (S_0 \in \mathcal{S}_\text{safe}) | S_0]=\mathbb{E}[G_1 | S_0].$ Then, using \eqref{eqn_safe_policy_gradient_G0}, the gradient of the probability of remaining safe reduces to
\begin{equation}\label{eqn_safe_policy_gradient_G1}
    \nabla_\theta \mathbb{P} \left(\bigcap\limits_{t=0}^{T} \{ S_t \in \mathcal{S}_\text{safe}\} |\pi_\theta, S_0 \right)=\nabla_\theta \mathbb{E}\left[G_1 | S_0\right].
\end{equation}
In Lemma~\ref{lemma_safe_policy_gradient_G1} we derive a recursive relationship for the gradient of $\mathbb{E}\left[G_t\mid S_{t-1}\right], t=1,2,\cdots, T-1 $. By virtue of Lemma~\ref{lemma_nabla_E_G1_S0_GT_ST-1}, to complete the proof of the result it suffices to establish that 
\begin{align}\label{eqn_thing_to_show}
    &\mathbb{E}\left[\nabla_\theta\mathbb{E}\left[G_T\mid S_{T-1}\right]\prod_{t=1}^{T-1}\mathbbm{1}\left(S_{t}\in\mathcal{S}_{\text{safe}}\right) \mid S_0\right] \nonumber \\
    &= \mathbb{E}\left[G_1\nabla_\theta \log \pi_{\theta}(A_{T-1} \mid S_{T-1})\mid S_0\right]. 
\end{align}
We establish this result next. Let us start by working with the gradient of the inner expectation on the left-hand side of the previous expression. 

Using the fact that $G_T = \mathbbm{1}\left(S_T\in\mathcal{S}_{\text{safe}}\right)$ and the definition of expectation one can write $\nabla_\theta\mathbb{E}\left[G_T\mid S_{T-1}\right]$ in the left-hand side of the previous expression as
\begin{equation}\label{eqn_nabla_G_T}
   \nabla_\theta\mathbb{E}\left[G_T\mid S_{T-1}\right] = \nabla_\theta \int_{\mathcal{S}} \mathbbm{1} (s_T \in \mathcal{S}_\text{safe}) p(s_T | S_{T-1}) \, ds_T, 
\end{equation}
where $p\left(s_T\mid S_{T-1}\right)$ denotes the conditional probability of $S_T$ given $S_{T-1}$. Marginalizing the probability distribution it follows that 
\begin{align}
    p(s_T | S_{T-1}) \! = \! \int_{\mathcal{A}} \! p(s_T | S_{T-1}, a_{T-1})  \pi_{\theta}(a_{T-1} | S_{T-1}) da_{T-1}.
\end{align}
Consequently, \eqref{eqn_nabla_G_T} can be converted to
\begin{align}
   &\nabla_\theta\mathbb{E}\left[G_T\mid S_{T-1}\right] \nonumber \\
   &= \nabla_\theta\int_{\mathcal{S}\times \mathcal{A}} \mathbbm{1}\left(s_T \in \mathcal{S}_{\text{safe}}\right) p(s_T \mid S_{T-1}, \, a_{T-1}) \nonumber \\
   &\quad \quad \quad \quad \quad \quad \pi_{\theta}(a_{T-1}\mid S_{T-1}) \, ds_T da_{T-1}.
\end{align}
Note that in the previous expression, the only term dependent on $\theta$ is the policy, hence we have that
\begin{align}\label{eqn_gradient_GT}
   \nabla_\theta\mathbb{E}\left[G_T\mid S_{T-1}\right]  =\int_{\mathcal{S}\times \mathcal{A}}\! &\mathbbm{1}\left(s_T \in \mathcal{S}_{\text{safe}}\right) p(s_T \! \mid \! S_{T-1}, \, a_{T-1}) \nonumber \\
   &\nabla_\theta\pi_{\theta}(a_{T-1}\mid S_{T-1}) \, ds_T da_{T-1}. 
\end{align}
Applying the ``log-trick'' to the right-hand side of \eqref{eqn_gradient_GT} yields
\begin{align}
   &\nabla_\theta\mathbb{E}\left[G_T\mid S_{T-1}\right] \nonumber \\ &=\int_{\mathcal{S}\times \mathcal{A}}\mathbbm{1}\left(s_T \in \mathcal{S}_{\text{safe}}\right) p(s_T \mid S_{T-1}, \, a_{T-1}) \nonumber \\
   &\quad~\pi_{\theta}(a_{T-1}\mid S_{T-1}) \nabla_\theta\log\pi_{\theta}(a_{T-1}\mid S_{T-1}) \, ds_T da_{T-1}. 
   \end{align}
Since $p\!\left(s_T| S_{T-1},\!a_{T-1}\right)\! \pi_\theta \!\left(a_{T-1}| S_{T-1}\right)\! =\! p\left(s_T,a_{T-1}| S_{T-1}\right)$ is the joint probability distribution of $S_{T}$ and $A_{T-1}$ given $S_{T-1}$ the previous expression reduces to
\begin{equation}
    \nabla_\theta\mathbb{E}\left[G_T\mid S_{T-1}\right]  =\mathbb{E}\left[G_T\nabla_{\theta}\log\pi_\theta(A_{T-1}\mid S_{T-1})\mid S_{T-1}\right]. 
\end{equation}

Since $S_1,\ldots, S_{T-1}$ are measurable with respect to $S_{T-1}$ it follows that 
\begin{align}
   &\nabla_\theta\mathbb{E}\left[G_T\mid S_{T-1}\right] \prod_{t=1}^{T-1}\mathbbm{1}\left(S_{t}\in\mathcal{S}_{\text{safe}}\right) \nonumber \\
   &=\mathbb{E}\left[G_1\nabla_{\theta}\log\pi_\theta(A_{T-1}\mid S_{T-1})\mid S_{T-1}\right],
\end{align}
where we have used that $G_1 = G_T\prod_{t=1}^{T-1}\mathbbm{1}\left(S_{t}\in\mathcal{S}_{\text{safe}}\right)$. Substituting the previous expression in the left-hand side of \eqref{eqn_thing_to_show} it follows that 
\begin{align}
    &\mathbb{E}\left[\nabla_\theta\mathbb{E}\left[G_T\mid S_{T-1}\right]\prod_{t=1}^{T-1}\mathbbm{1}\left(S_{t}\in\mathcal{S}_{\text{safe}}\right) \mid S_0\right] \nonumber \\
    &=\mathbb{E}\left[\mathbb{E}\left[G_1\nabla_\theta \log \pi_{\theta}(A_{T-1} \mid S_{T-1}) \mid S_{T-1}\right]\mid S_0\right]. 
\end{align}
The law of total expectation completes the result claimed in \eqref{eqn_thing_to_show} and therefore completes the proof of Theorem~\ref{theorem_safe_policy_gradient}.

\end{proof}
The expression for the gradient in Theorem~\ref{theorem_safe_policy_gradient}, allows us to directly tackle safe RL problems that take the form of \eqref{eqn_problem1} using policy optimization techniques. It is worth pointing out that the proof of Theorem~\ref{theorem_safe_policy_gradient} is similar to policy gradient theorems in the literature~\cite{williams1992simple,sutton1999policy}.
Although promising, stochastic approximations of the gradient introduce challenges. Unlike the classic policy gradient (where policy parameters update each iteration), under this framework, the parameters in \eqref{eqn_theorem} only update when every step of the trajectory is safe, i.e., when $G_1=\prod_{t=1}^T\mathbbm{1}\left(S_t\in\mathcal{S}_{\text{safe}}\right)=1$. Addressing this issue is beyond
the scope of this work and opens up several interesting future research avenues. Despite this limitation, we demonstrate in the next section that the estimate proposed can solve problems of the form \eqref{eqn_problem1}.

\section{Numerical Experiments}
\label{Numerical_Experiments}
To illustrate the ability of using \eqref{eqn_theorem} to
train safe policies, we consider a continuous navigation task in an environment populated with hazardous obstacles (see Figure~\ref{configuration}). The coordinates of the obstacles' centers are (7, 7), (3, 7), (1.5, 4), (4.5, 3), (8, 3) with the corresponding radii \{2, 1, 0.5, 1.5, 0.75\}. The state in this example  is  the position of the agent on the $x-$ and $y-$axis, namely, $s=(x, y)$. We set the continuous state space as $\mathcal{S}=[0, 10] \times [0, 10]$. The goal of the agent is to reach a goal position $s_{goal}=(9, 1.5)$ within the time horizon $T=20$, while avoiding 5 obstacles. Accordingly, the safe set is defined as the whole map/state space excluding regions of 5 obstacles.

The agent's action $a$ is a two-dimensional velocity. For a given state and action at time $t$, the state evolves according to $s_{t+1}=s_t+ a_t T_s$ with $T_s=0.05$. The policy of the agent is a multivariate Gaussian distribution
\begin{align}\label{gaussian_policy}
    \pi_\theta (a | s)\! =\! \frac{1}{\sqrt{2\pi |\Sigma|}} \exp \! \left(\! -\frac{1}{2} \left(a-\mu_\theta(s)\right)^\top \!\! \Sigma^{-1} \!\! \left(a-\mu_\theta(s)\right) \! \right),
\end{align}
where $\mu_\theta(s)$ and $\Sigma$ denote the mean and covariance matrix of the Gaussian policy. We parameterize $\mu_{\theta}(s)$ as a linear combination of Radial Basis Functions (RBFs) 
\begin{align}\label{RBF}
    \mu_\theta (s) = \sum_{k=1}^{d} \theta_k \exp \left(-\frac{||s-\Bar{s}_k||^2}{2\sigma^2} \right),
\end{align}
where $\theta = [\theta_1, \theta_2, \cdots, \theta_d]^\top$ are parameters that need to be learned, $\Bar{s}_k$ are centers of each RBFs kernel and $\sigma$ their bandwidth. In this experiment we set $\Sigma = \text{diag} (0.5, 0.5)$, $\sigma =0.5$, $d=1681$  and $\Bar{s}_k = (x_k, y_k),  k=1, 2, \cdots, 1681$ where $\Bar{s}_k$ forms a uniform lattice with separation $0.25$ in each direction. The reward is the negative squared distance to the goal position $s_{goal}$, i.e., $r(s_t, a_t)= -\lVert s_t - s_{goal} \rVert^2$. 
\begin{figure}
\centering
\includegraphics[width=\columnwidth]{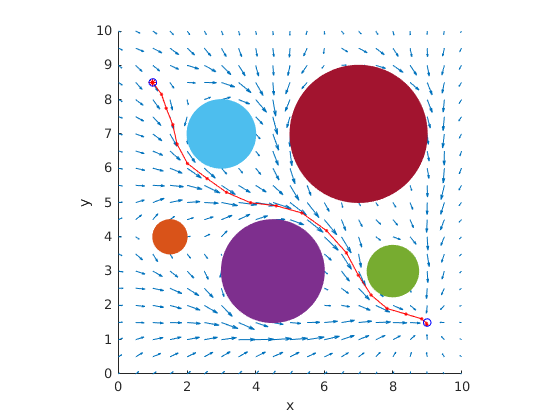}
\caption{Navigation policy learned after 40,000 iterations for probabilistic constraint formulation selecting $\lambda=6, \eta = 0.002$. The agent is trained to navigate starting from (1, 8.5) to a goal (9, 1.5).}
\label{configuration}
\end{figure} 
\begin{figure}[htbp]
\centering 
\subfigure[Evolution of the averaged cumulative reward] 
{
\begin{minipage}{9cm}
\centering    
\includegraphics[scale=0.55]{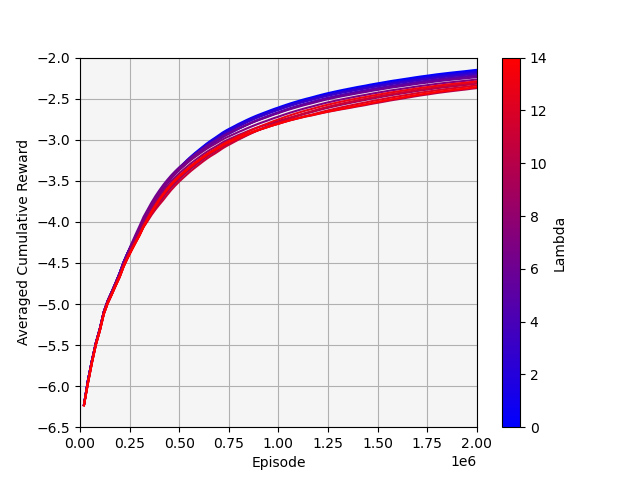}  
\end{minipage}
}
\subfigure[Evolution of the averaged safety probability] 
{
	\begin{minipage}{9cm}
	\centering 
	\includegraphics[scale=0.55]{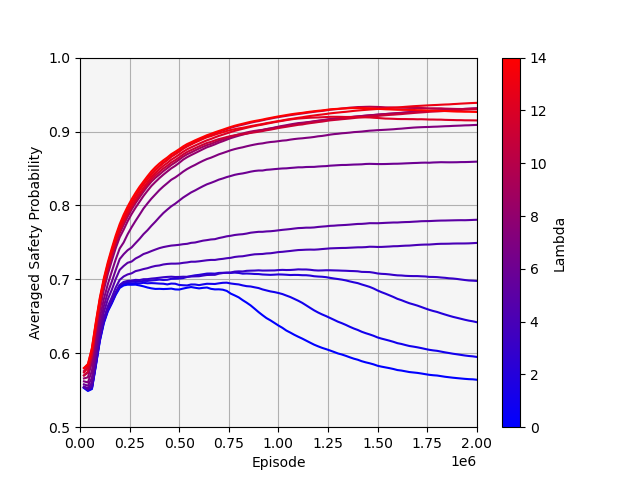}  
	\end{minipage}
}
\caption{{Evolution of the averaged cumulative reward and averaged safety probability as the algorithm iterates. The step-size $\eta$ and time horizon $T$ are fixed to 0.0006 and 2000000. The color bar represents that $\lambda$ is selected from $[0.5, 14]$ in which red and blue denote large and small values of  $\lambda$ respectively.} }
\label{training_curve}
\end{figure} 
%
%
%
%
%
%
%
%
%
%
%
%

To solve problem~\eqref{eqn_agmt_obj} in this set-up, we consider a stochastic approximation of the gradient ascent, which yields the following update rule for the parameters $\theta$ of the policy
%
\begin{align}\label{update_rule_path}
    \theta_{k+1} = \theta_k + \eta \left( \hat{\nabla}_\theta V(\theta_k) +  \lambda \hat{\nabla}_\theta \mathbb{P} \left(\bigcap\limits_{t=0}^{T} \{ S_t \in \mathcal{S}_\text{safe}\}\right)   \right),
\end{align}
where the first term in bracket on the right-hand side is computed by a stochastic approximation of the Policy Gradient Theorem~\cite{sutton1999policy} and the second term is a stochastic approximation of \eqref{eqn_theorem}.

Figure~\ref{configuration} demonstrates that the agent with probabilistic safety constraints is trained to safely navigate to the goal position (9, 1.5) from the initial state (1, 8.5) after 40,000 episodes of training, during which $\lambda$ is fixed to be 6 and with $\eta = 0.002$.

Note that to attain different levels of safety, in general, different values of $\lambda$ are required. Subsequently, we run \eqref{update_rule_path} with different $\lambda$ to find solutions to problem \eqref{eqn_agmt_obj}. The worst-case scenario requires $\eta = 0.0006$ and 2,000,000 episodes. As depicted in Figure~\ref{training_curve}, the color bar represents that $\lambda$ is chosen from [0.5, 14] in which red and blue denote large and small values of $\lambda$ respectively. To quantitatively analyze the effect of $\lambda$ on safety and task performance, the safety probability is defined as the fraction of safe episodes over 1000 independent episodes under different random seeds and then averaged across the evaluation episodes. Similarly, we estimate the value function by averaging the cumulative reward across the evaluation episodes. It is not surprising that larger $\lambda$ yields larger safety probability and lower cumulative rewards.

We thus demonstrated that stochastic approximations of the gradient of the probabilistic constraints (Theorem~\ref{theorem_safe_policy_gradient}) can successfully be employed for solving the task at hand. Notice that the algorithms used in this numerical section are Monte Carlo methods~\cite[Chapter 5]{sutton2018reinforcement}. In the RL literature there exist algorithms that exploit temporal differences~\cite{watkins1992q} and/or trust regions~\cite{schulman2015trust} which result in faster convergence rates.  As mentioned in Section \ref{Gradient of the Probabilistic Constraint} the estimation of the gradient of the probabilistic constraint is zero for every unsafe episode, thus hindering the rate of convergence. Analyzing alternatives to overcome this issue is out of the scope of this work.


\section{Conclusions}\label{sec_conclusions}

In this work, we considered the problem of learning probabilistic safe policies. Unlike cumulative constraints often considered in the literature, we aim to guarantee that the state of the agent remains in the safe set with high probability.

We have provided the first expression for the gradient of a probabilistic constraint safety requirement, thus enabling the application of Policy Optimization methods in these settings as well. We have also demonstrated that updates based on this gradient can be used to solve continuous navigation problems in cluttered environments. The stochastic approximation presented is not without issues. In particular, the gradient estimate is zero unless the agent remains on the safe set during the episode. Future work includes improving this estimate and characterizing the convergence and data-efficiency of algorithms that use this gradient.

\bibliographystyle{ieeetr}
\bibliography{bib}

\vspace{12pt}

\end{document}